\newtheorem{theorem}{Theorem}
\newcommand\BibTeX{{\rmfamily B\kern-.05em \textsc{i\kern-.025em b}\kern-.08em
T\kern-.1667em\lower.7ex\hbox{E}\kern-.125emX}}
\begin{document}

\title{Optimizing fire allocation in a NCW-type model}
\author{Nam H. Nguyen, My A. Vu, Anh N. Ta, Dinh V. Bui and Manh D. Hy }
\affiliation{Department of Mathematics, Faculty of Information
Technology, Le Quy Don Technical University, 236 Hoang Quoc Viet Str.,
Bac Tu Liem Dist., Hanoi, Vietnam, 100000.}
\corrauth{Manh Duc Hy, Department of Mathematics, Faculty of
Information Technology, Le Quy Don Technical University, 236 Hoang
Quoc Viet Str., Bac Tu Liem Dist., Hanoi, Vietnam, 100000.}

\email{ducmanhktqs@gmail.com}
\begin{abstract}
In this paper, we introduce a non-linear Lanchester's model of NCW-type and investigate an optimization problem for this model, where only the Red force is supplied by several supply agents. Optimal fire allocation of the Blue force is sought in the form of a piece-wise constant function of time. A "threatening rate" is computed for the Red force and each of its supply agents at the beginning of each stage of the combat. These rates can be used to derive the optimal decision for the Blue force to focus its firepower to the Red force itself or one of its supply agents. This optimal fire allocation is derived and proved by considering an optimization problem of number of Blue force's troops. Numerical experiments are included to demonstrate the theoretical results.
\end{abstract}
\keywords{Non-linear Lanchester's model, Network Centric Warfare, optimal fire allocation, optimization problem, piece-wise constant function, optimal strategy.}
 \maketitle
\section{Introduction}
In 1916, Lanchester \citep{Lan} introduced a mathematical model for a battle in the form of a system of differential equations two unknowns of which  are the number of the two involved parties. 
In 1962, Deitchman \citep{Dei} extended Lanchester's model by investigating battle between an army and a guerilla force. This model is called a guerilla warfare model or an asymetric model.
In this model, the fire of guerilla force is supposed to be aimed while of the army is unaimed.
 Later, Schaffer \citep{Sch} and  Schreiber \citep{Sch1} generalized Deitchman's model further by taking into account the intelligence and considered the problem of optimizing the fire allocation of the army.\\
 
Recently, Kaplan, Kress and Szechtman (KKS) \citep{Kap}, \citep{Kre} also considered Lanchester model with intelligence in a scenario of counter-terrorism. In this asymmetric model, intelligence play a decisive role in the outcome of the combat.
In addition, a lot of researchers are interested in optimization problems involving warfare models.	
In 1974, Taylor \citep{Tay} studied several problems of optimizing the fire allocation for some warfare models.
Lin and Mackay \citep{Mac} extended Taylor's results on optimization of fire allocation for Lanchester's model of the form one against many. A common interest of these two studies is optimizing the number of troops. Feichtinger and his colleagues \citep{Fei1} studied an optimization problem for KKS model with objective function being the cost of the battle, intelligence and reinforcement being control variables. Then, the authors investigated a modified asymmetric Lanchester $(n,1)$ model describing a combat between a group of $n$ counter-terrorism forces and a single group of terrorists, \citep{Manh}. In these works above, the role of military supply has not been studied thoroughly. 
	 
The idea of Network Centric Warfare has recently drawn a lot of attention of researchers all over the world. Network Centric Warfare (NCW) emphasizes the role of information and information sharing between entities involved. The idea is originated in 1995 when Admiral William A. Owens introduced the notion "system of systems" in a paper published by INSS, \citep{Owen}. In this work, Owens described the stochastic evolution of a system of reconnaissance, command and control system, together with high-precision weapons, allowing to share battlefield awareness, to estimate the target and to allocate resources. NCW is an approach to the conduct of warfare that derives its power from the effective linking or networking of the warfighting enterprise. It is characterized by the ability of geographically dispersed forces (consisting of entities) to create a high level of shared battlespace awareness that can be exploited via self-synchronization and other network-centric operations to achieve commanders' intents \citep{Alb}. For a more thorough exposition of NCW, we refer the readers to \citep{Smi1}, \citep{Tun}, \citep{Tun1}.

By historical facts, it is undeniable that supply forces also play a vital role in the outcome of the battle. On the other hand, the optimal decision making problem in military field is interesting itself and fire allocating problem is one of the most common problems. In this regard, Donghyun Kim \citep{Kim} set up and investigated a non-linear Lanchester-type model where one of the parties is supported by a network consisting of all kinds of supply such as intelligence, ammunition, medical, etc. 
However, there are various types of supply an army should be provided during a combat. Each of them plays a different role in the combat and their affects are different, too. Nevertheless, all these kinds of supply to a party can be eliminated by the firepower of the other one for the sake of victory. 

In this work, exploiting the idea of NCW and bearing in mind the role of military logistics, we set up a model where a Blue force $B$ is fighting against a Red force $R$ supported by $n$ supply agents $A^i\, (i=1,\ldots , n)$. These supply agents have different affects on the attrition rate of $R$. By assuming that the affects are all described by linear functions and  investigating the resulting model, we manage to derive an optimized strategy for the Blue force $B$ to keep its status at its best.
		 The rest of the paper is organized as follows. Next section is devoted to present model setting and to investigate the optimization problem for this model. Numerical experiments are presented in the last section to illustrate the main results.

\section{Main Results}
\subsection{The Model}
Let us consider a combat between a Blue force and a Red force and assume that the Red force is supplied by $n$ different supply agents. 
We use the following notations:\\ 
\begin{itemize}
\item$B$: Blue force
\item$R$: Red force (Attack)
\item$A^i\, (i=1,...,n)$: Red force's supply agents
\item$r_R$: an attrition rate of $B$ to $R$
\item $r_i$: an attrition rate of $B$ to $A^i$
\item $f^i_\alpha(A^i)$: an attrition function of $A_i$ complementing $R$ to $B$.
\item $P=(p_0,p_1,...,p_n)$: the fire allocating proportion of $B$ to $R$ and $A^i:\, i=1,\ldots , n$, respectively.
\item $\alpha^{A^i}_{c}:$ the fully connected attrition rate of $R$ with $A^i$ 
\item $\alpha^{A^i}_{d}:$ the fully disconnected attrition rate of $R$ with $A^i$ $(\alpha^{A^i}_{d}\leq\alpha^{A^i}_{c})$.
\end{itemize}
 
 For the sake of simplicity, we study a model where $B$ are fighting against $R$ and $R$ have two supply agents ${A^1}$ and $A^2$ with two complementing fire attrition functions ${f^1_\alpha }\left( A^1 \right),\, {f^2_\alpha }\left( {A^2} \right)$, respectively.  
\begin{figure}
 \includegraphics[scale=1]{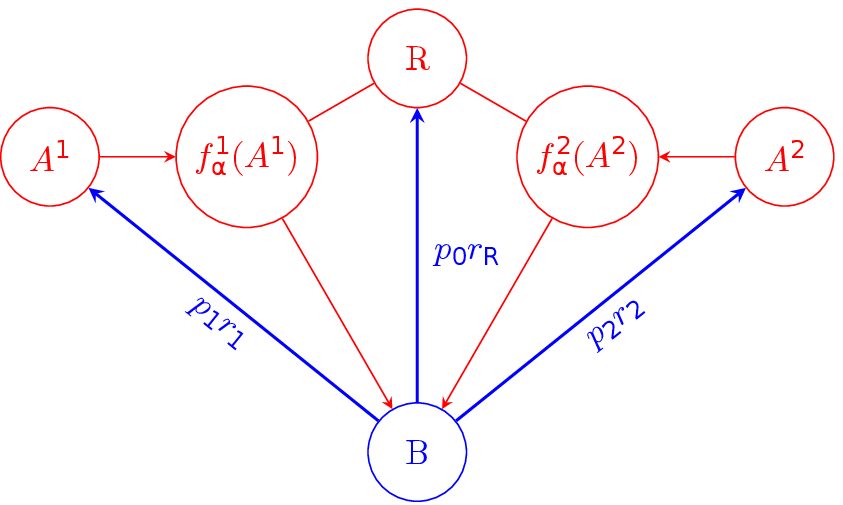}
 \caption{Diagram for the model $\left( {B \texttt{ vs } (R,A^1,{A^2})} \right)$.}
 \label{hinh1}
 \end{figure}
We denote this model as $\left( {B \texttt{ vs } (R,A^1,{A^2})} \right)$. The diagram for this model is presented in Figure \ref{hinh1}. 
Let us consider the problem of finding the optimal fire allocation of $B$ such that at any time $t$, the remaining troops of $B$ is maximized. We seek for the optimal fire allocating proportion of $B$ in the form of a piece-wise constant function. This choice is realistic since it is absurd to alter the fire allocation constantly, especially during a certain stage of the battle. For this purpose, we assume that $P = \left( p_0 ,p _1,p _2 \right)$ where $p_i:\, i=0,1,2$ are piece-wise constant functions such that $p_0 ,p _1,p _2 \in \left[0,1\right]:\,p_0  + p _1 + p _2 = 1$.  \\
The complementing attrition functions are assumed to be linear ones of the form:
\begin{equation*}
f^i_\alpha(A^i)  = \alpha _d^{A^i} + (\alpha _c^{A^i} - \alpha _d ^{A^i})\frac{A^i}{A^i_0}\, ( i=1,2).
\end{equation*}
where  $A^i_0$ number of $A^i$'s troops at the beginning. Let us observe that, at the beginning, when $A^i=A^i_0$, $R$ and $A^i$ has a full connection and $f^i_\alpha(A^i)$ attains its maximal value $\alpha _c^{A^i}$. When $A^i$ is totally eliminated by $B$, $A^i=0$, the connection between $R$ and $A^i$ is terminated and $f^i_\alpha(A^i)$ becomes $\alpha _d^{A^i}$.
 The numbers of troops of all the parties involved in the battle are governed by the following system of differential equations:
 \begin{equation}\label{model}
\left \{
 \begin{aligned}
 \frac{dB}{dt} &= -\left(f^1_\alpha(A^1) + f^2_\alpha({A^2})\right)R\\
 \frac{dR}{dt}&=-p_0r_R B\\
 \frac{dA^1}{dt}&=-p _1 r_1B\\
  \frac{d{A^2}}{dt}&=-p _2 r_2B.
 \end{aligned}
 \right.
 \end{equation}
 It is apparent that supply agents $A^1,A^2$ create their impacts on the outcome of the battle by influencing the attrition rate of $R$ to $B$. When their number of troops is eliminated, their impacts are stopped accordingly. 
 \subsection{Optimal fire allocation of Blue force}
For the model \eqref{model}, we consider the problem of maximizing the Blue force's number of troops at any time.
 Let us compute the following:
 \begin{equation}
\begin{aligned}
b_0&=&(\alpha_c^{A^1}+\alpha_c^{A^2})r_R, \\
b_1&=&\frac{r_1(\alpha_c^{A^1}-\alpha_d^{A^1})R_0}{A^1_0},\\
b_2&=&\frac{r_2(\alpha_c^{A^2}-\alpha_d^{A^2})R_0}{{A^2}_0}.
\end{aligned}\label{b012}
 \end{equation}
These numbers represent the "threatening rates" which the Red force and its supply agents expose to the Blue force. The optimal fire allocation of $B$ is pointed out in the following theorem.
\begin{theorem}\label{theo1}
If $p_0, p_1, p_2$ are piece-wise constant functions and $p_0,p_1,p_2 \in [0,1]: p_0+p_1+p_2 =1$ then the optimal fire allocation of $B$ is 
\begin{equation}
P ^* = 
\begin{cases} 
(1,0,0) \textrm{ if } b_0\geq b_1 \textrm{ and } b_0\geq b_2,\\
(0,1,0) \textrm{ if } b_1\geq b_2 \textrm{ and } b_1\geq b_0,\\
(0,0,1) \textrm{ if } b_2\geq b_1 \textrm{ and } b_2\geq b_0.
\end{cases} 
\end{equation}
\end{theorem}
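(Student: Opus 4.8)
The plan is to show that, at the instant a stage begins, every admissible allocation produces the same value and the same first time-derivative of $B$, so that the control can influence $B$ only through its second derivative; maximizing that second derivative then selects the pure strategy $P^*$. First I would write the instantaneous loss rate of the Blue force as $g = \left(f^1_\alpha(A^1)+f^2_\alpha(A^2)\right)R$, so that the first equation of \eqref{model} reads $\dot B = -g$. At the beginning of a stage the quantities $A^1, A^2, R$ are fixed state values, hence $g$, and therefore $\dot B$, take the same value for every choice of the constant allocation $(p_0,p_1,p_2)$. Thus neither $B$ nor $\dot B$ at the start of the stage can be influenced by the fire allocation, and the control first appears at the level of $\ddot B$.

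Second, I would compute $\ddot B = -\dot g$. Differentiating $g$ and substituting the four equations of \eqref{model} gives
\[
\dot g = \left(\tfrac{df^1_\alpha}{dt}+\tfrac{df^2_\alpha}{dt}\right)R + \left(f^1_\alpha+f^2_\alpha\right)\dot R,
\]
where $\tfrac{df^i_\alpha}{dt} = -\tfrac{(\alpha_c^{A^i}-\alpha_d^{A^i})r_i p_i B}{A^i_0}$ and $\dot R = -p_0 r_R B$. Evaluating at the start of the first stage, where $A^i=A^i_0$, $R=R_0$ and $f^i_\alpha=\alpha_c^{A^i}$, and collecting terms, I expect to obtain exactly
\[
\ddot B = B\left(b_0 p_0 + b_1 p_1 + b_2 p_2\right),
\]
with $b_0,b_1,b_2$ the threatening rates of \eqref{b012}. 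This identification of the threatening rates as the coefficients of $p_0,p_1,p_2$ in $\ddot B$ is the key step of the argument.

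Third, since $B>0$, maximizing $\ddot B$ at the start of the stage amounts to maximizing the linear functional $b_0 p_0 + b_1 p_1 + b_2 p_2$ over the simplex $\{p_i\ge 0,\ p_0+p_1+p_2=1\}$. The maximum of a linear functional over a simplex is attained at a vertex, namely the unit vector corresponding to the largest of $b_0,b_1,b_2$, which is precisely $P^*$. The Taylor expansion $B(t)=B(0)+\dot B(0)t+\tfrac12\ddot B(0)t^2+O(t^3)$ then shows that, because $B(0)$ and $\dot B(0)$ are common to all allocations, the strategy maximizing $\ddot B(0)$ yields the largest $B(t)$ just after the stage begins.

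The hard part will be passing from this local, start-of-stage optimality to the claim that $B$ is maximized at \emph{any} time $t$. I would address it by exploiting the piecewise-constant structure: the threatening rates are recomputed at the beginning of every stage from the current values of $R, A^1, A^2$ (the formulas \eqref{b012} being the first-stage instance), so the selection rule is reapplied at each stage boundary and only the start-of-stage derivatives govern the decision. Where a genuine global-in-time comparison is required, I would set $D=B^*-B$ against a competing trajectory, note $D(0)=\dot D(0)=0$ and $\ddot D(0)\ge 0$, and attempt to propagate $D\ge 0$ by an ODE comparison argument; one must additionally treat the boundary events at which some $A^i$ reaches $0$, after which $f^i_\alpha$ is pinned at $\alpha_d^{A^i}$ and the corresponding threatening rate drops out of the optimization.
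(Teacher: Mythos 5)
Your local computation is correct: at the start of a stage $B(0)$ and $\dot B(0)$ are the same for every admissible allocation, and $\ddot B(0)=B_0\left(b_0p_0+b_1p_1+b_2p_2\right)$, so the threatening rates of \eqref{b012} do appear as the coefficients of the allocation in the second derivative; maximizing this linear form over the simplex indeed selects $P^*$. However, this only proves that $P^*$ is optimal for sufficiently small $t$ after the stage begins, and the step you yourself flag as ``the hard part'' is a genuine gap, not a technicality. From $D(0)=\dot D(0)=0$ and $\ddot D(0)\ge 0$ you can conclude $D\ge 0$ only on a neighbourhood of $t=0$: the inequality $\ddot D\ge 0$ does not persist, because $B^*$ and $B$ evolve under genuinely different dynamics (their states $R,A^1,A^2$ follow different histories), so $D$ satisfies no differential inequality to which a comparison theorem applies. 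Moreover, a second-order expansion at $t=0$ is blind to the quantity that governs the later part of the stage: it cannot distinguish the vertex $(0,1,0)$ from a mixture $(\lambda,1-\lambda,0)$ when $b_0=b_1$, nor can it rule out that an allocation with slightly smaller $\ddot B(0)$ overtakes $P^*$ in mid-stage.

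The paper closes exactly this gap by integrating the dynamics once instead of expanding locally. Writing $X(t)=\int_0^t B(s)\,ds$, one obtains the first integral $B(t)^2=-\tfrac{2}{3}C_1X^3(t)+C_2X^2(t)-2C_3X(t)+B_0^2$, where $C_3$ is allocation-independent, $C_1=a_1p_0p_1+a_2p_0p_2\ge 0$, and $C_2=b_0p_0+b_1p_1+b_2p_2$; note that your $\ddot B(0)$ equals $C_2B_0$, i.e., your argument recovers only $C_2$ and misses $C_1$ entirely. The structural fact that makes the global claim true is that the vertex $P^*$ \emph{simultaneously} minimizes $C_1$ (every vertex makes $C_1=0$) and maximizes $C_2$ over the whole simplex --- this is what the paper's scalarization and five-case analysis verify --- so the cubic polynomial attached to $P^*$ dominates the cubic of every competitor for all $X\ge 0$, and a comparison argument in the variable $X$ (rather than a Taylor estimate in $t$) then yields $B^*(t)\ge B(t)$ for every $t$ in the stage. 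To repair your proof you would need to reproduce this first-integral comparison, or supply an equivalent global mechanism; the purely local start-of-stage analysis, however cleanly it identifies the threatening rates, is strictly weaker than the statement of the theorem.
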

\begin{proof}
Let $X(t) = \int _0^t B(s) ds$. It follows that $X' (t) = B(t)$ and 
\begin{equation}\label{d2X}
\begin{aligned}
X''(t) &=B'(t) \\
& = -\left( f_\alpha (A^1) + g_\alpha ({A^2})\right)R.
\end{aligned}
\end{equation}
We also have 
$$\int _0^t dR=-\int_0^t r _R B(s) ds \Rightarrow R(t) -R(0) = -p_0 r _R X(t).$$ This leads to 
\begin{equation} \label{Rt}
R(t) = -p_0r _R X(t) + R_0.
\end{equation}
By similar arguments, we get 
\begin{eqnarray}
A^1(t) = -p_1 r _1X(t) + A^1_0, \label{A^1t}\\
{A^2}(t) = -p _2 r _2 X(t) + A^2_0.\label{A^2t}
\end{eqnarray}
Substituting \eqref{Rt}, \eqref{A^1t} and \eqref{A^2t} into \eqref{d2X} we obtain
\begin{equation}\label{d2XbyC}
X''(t) = -C_1 X^2(t) +C_2 X(t) - C_3
\end{equation}
where 
\begin{equation*}
\begin{aligned}
C_1 =& \frac{p_0 p_1 r _R r _1 (\alpha_c^{A^1}-\alpha_d^{A^1})}{A^1_0} + \\
&\frac{p_0 p_2 r _R r_2 (\alpha_c^{A^2}-\alpha_d^{A^2})}{A^2_0}, \\
C_2=&\frac{p_1 r_1 (\alpha_c^{A^1}-\alpha_d^{A^1})R_0 + p_0 r _R \alpha_c^{A^1}A^1_0}{A^1_0} +\\
&\frac{p_2 r_2 (\alpha_c^{A^2}-\alpha_d^{A^2})R_0 + p_0 r _R \alpha_c^{A^2} A^2_0}{A^2_0},\\
C_3=&(\alpha_c^{A^1} + \alpha_c^{A^2}) R_0.
\end{aligned}
\end{equation*}
Multiplying both sides of \eqref{d2XbyC} by $dX'(t)$ and integrating, one gets 
\begin{equation*}
\begin{aligned}
X'(t) & = B(t) \\
& = \sqrt{-\frac{2}{3}C_1 X^3(t) + C_2X^2(t) -2C_3X(t) +C_4},
\end{aligned}
\end{equation*}
where $C_4$ is an integral constant. Since $C_3$ is not changing in time and $C_1, C_2$ are non-negative, in order to maximize $B(t)$, we will seek for conditions for which $C_1$ is minimal and $C_2$ is maximal simultaneously. Thus, we consider the multi-objective optimization problem 
\begin{equation}\label{moo}
\begin{cases}
\min \limits _{P\in{\mathcal{P}}} C_1\\
\max\limits _{P\in{\mathcal{P}}} C_2,
\end{cases}
\end{equation}
where $\mathcal{P}=\left\lbrace \left( p_0,p_1,p_2\right): 0\leq p_0,p_1,p_2\leq 1, p_0+p_1+p_2=1\right\rbrace $.\\
Let us denote 
$$
a_1 = \frac{r _R r _{A^1}(\alpha _c^{A^1}-\alpha_c^{A^1)}}{A^1_0}, a_2 = \frac{r _R r _{A^2} (\alpha _c^{A^2}-\alpha_c^{A^2})}{A^2_0}.
$$
The problem \eqref{moo} now takes the form
\begin{equation}\label{moo_xyz}
\begin{aligned}
&\begin{cases}
\min (a_1 xy + a_2 xz) \\
\min (-b_0x - b_1y -b_2z),
\end{cases} \\
&\text{ s.t. }
\begin{cases}
0 \leq x,y,z \leq 1\\
 x+y+z=1.
\end{cases}
\end{aligned}
\end{equation}
In order to solve the problem \eqref{moo_xyz}, we use the scalarization method. Thus, for each $\gamma \in [0.1]$ we define the function 
$$
F_\gamma (x,y,z)=\gamma (a_1xy + a_2xz)-(1-\gamma)(b_0x+b_1y+b_2z)
$$
and consider the following problem
\begin{equation}\label{e12}
\begin{aligned}
& \min F_\gamma (x,y,z)\\
&\text{ s.t. }
\begin{cases}
0 \leq x,y,z \leq 1\\
 x+y+z=1.
\end{cases}
\end{aligned}
\end{equation}
By substituting $x=1-y-z$ into \eqref{e12} we obtain the following problem:
\begin{equation}
\begin{aligned}
&\min F_\gamma (1-y-z,y,z)\\
& \text{ s.t } 
\begin{cases}
0 \leq y,z \leq 1\\
y+z\leq 1,
\end{cases}
\end{aligned}
\end{equation}
where
\begin{equation*}
\begin{aligned}
F_\gamma \left( {1 - y - z,y,z} \right) =  \gamma \left( {1 - y - z} \right)\left( {{a_1}y + {a_2}z} \right) -\\ \left( {1 - \gamma } \right)\left( {b_0 + \left( {{b_1} - b_0} \right)y + \left( {{b_2} - b_0} \right)z} \right).
\end{aligned}
\end{equation*}
We consider the following distinct cases:
\begin{itemize}
\item[1.] $b_1\geq b_2\geq b_0$. Since $\gamma(1-y-z)(a_1y+a_2z)\geq 0$, one gets 
\begin{equation*}
\begin{aligned}
\min F_\gamma \geq & -(1-\gamma)(b_0+(b_1-b_0)y+(b_2-b_0)z)\\
\geq & -(1-\gamma)(b_0+(b_1-b_0)(y+z))\\
\geq & -(1-\gamma)b_1 = F_\gamma(0,1,0).\\
\end{aligned}
\end{equation*}
\item[2.]  $b_1\geq b_0\geq b_2$. The problem becomes 
\begin{multline*}
\min \{\gamma (1-y-z)(a_1y+a_2z) \\
+(1-\gamma)(b_0-b_2)z \\
-(1-\gamma)(b_0+(b_1-b_0)y) \}.
\end{multline*}
It easily follows that $$\min F_\gamma \geq -(1-\gamma) b_1 = F_\gamma(0,1,0).$$
\item[3.]  $b_2\geq b_1\geq b_0$. Simple calculations yield that 
\begin{equation*}
\begin{aligned}
\min F_\gamma \geq & -(1-\gamma)(b_0+(b_2-b_0)(y+z))\\
\geq & -(1-\gamma)b_2 = F_\gamma(0,0,1).
\end{aligned}
\end{equation*}
\item[4.] $b_2\geq b_0\geq b_1$. The problem becomes
\begin{multline*}
\min \{\gamma (1-y-z)(a_1y+a_2z) \\
+(1-\gamma)(b_0-b_1)y \\
-(1-\gamma)(b_0+(b_2-b_0)z) \}.
\end{multline*}
It leads to $$\min F_\gamma \geq -(1-\gamma) b_2 = F_\gamma(0,0,1).$$

\item[5.]  $b_0\geq b_1,\; b_0\geq b_2$. The problem now turns out to be 
\begin{multline*}
\min \{\gamma (1-y-z)(a_1y+a_2z) \\
+(1-\gamma)((b_0-b_1)y+(b_0-b_2)z) \\
-(1-\gamma)b_0 \}.
\end{multline*}
\end{itemize}
One obtains, in this case, that 
$$
\min F_\gamma \geq -(1-\gamma)b_0 = F_\gamma(1,0,0).
$$
The proof is now complete.
\end{proof}
Basically, the battle has three stages. In the first stage, the Blue force focuses its firepower to one of the three entities of the Red party: $R,A_1,A_2$. Once the targeted entity is eliminated, in the second stage, the Blue force concentrates its troops to fight one of the two remaining entities. In the last stage, the Blue force focuses its power to the last remaining entity. However, the battle may not necessarily have all three stages, once the Red force is eliminated after some stage the battle comes to an end since the Blue force is no longer attrited. In our proof for Theorem \ref{theo1}, by comparing the three "threatening rates", we establish the optimal fire allocation of Blue force for the first stage. If Red force remains untouched after the first stage, there is only one supply agent remaining, we may apply Kim's results \citep{Kim}. In case 1 of our five cases, in order to make the remaining troops of Blue force maximal, it should strategically focus all its firepower to $A^1$. 
\begin{figure}
\includegraphics[scale=1]{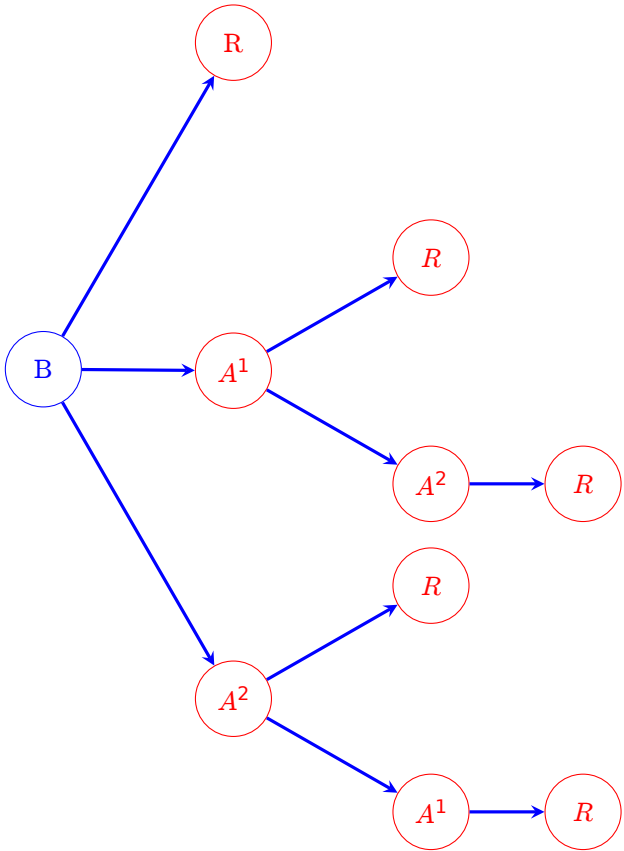}
\caption{Five cases and their corresponding processes in Theorem \ref{theo1}.}\label{graph}
\end{figure}
When $A^1$ is excluded, the "threatening rates" are now $b_0=\left(\alpha _c^{A^2} + \alpha _d^{A^1}\right) r_R,\; 
b_2 = \frac{r _2(\alpha_c^{A^2} -\alpha_d^{A^2} )R_0}{A_0^2}$. By  Kim's results, if $b_0 < b_2$, Blue force should concentrate its troops to fight $A^2$. Once $A^2$ is out of the picture, there comes the third stage where Blue force fights the Red force with no supply agents. Otherwise, $b_0 > b_2$, second stage of the battle commences with Blue force focusing its power to fight Red force and the battle will come to an end after this stage. For the remaining four cases, the strategy of Blue force is analyzed similarly. All five cases of the battle and their corresponding processes are described in the graph of Figure \ref{graph}. 
\begin{figure}
\includegraphics[scale=1]{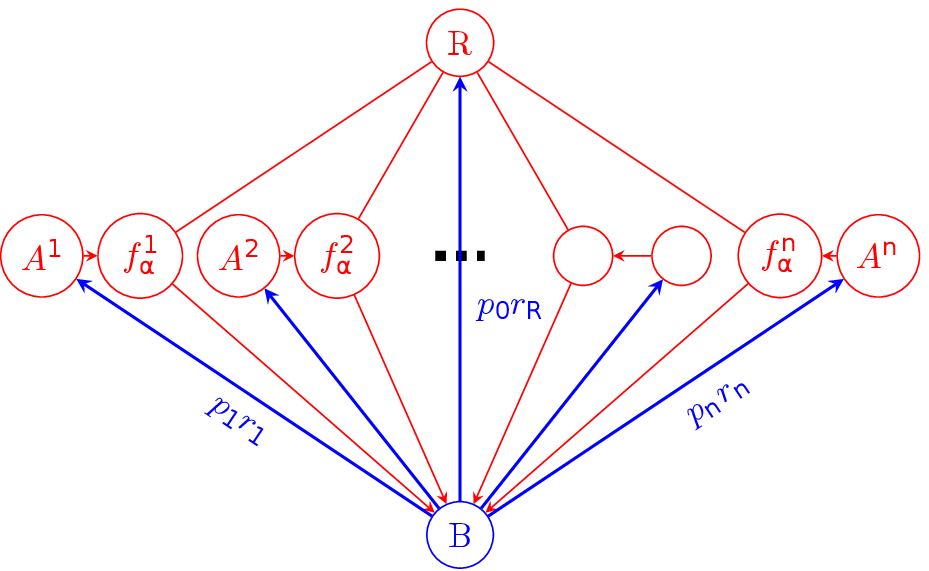}
\caption{The generalized model $\left( B \text{ vs } (R,A^1,A^2,\ldots , A^n )\right).$}\label{n1}
\end{figure}
\\
Now, we generalize this model where the Red force supplied by $n$ supply agents fights the Blue force with an attrition rate $f^1_\alpha (A^1) + f^2_\alpha (A^2) + \cdots + f^n_\alpha (A^n)$. We denote the generalized model as $\left( B \text{ vs } (R,A^1,A^2,\ldots , A^n )\right)$.
Let us assume, as before, that 
\begin{equation} \label{falphas}
f^i_\alpha (A^i) = \alpha_d^{A^i} + (\alpha_c^{A^i}-\alpha_d^{A^i}) \frac{A^i}{A^i_0}, i=1,2,\ldots , n.
\end{equation}
The model is described by the following system of differential equations: 
\begin{equation}
\begin{cases}
\frac{dB}{dt} =  - \sum \limits_{i = 1}^n \left[ \alpha _d^{A^i} + \left( \alpha _c^{A^i} - \alpha _d^{A^i} \right) \frac{A^i}{A_0^i} \right] R\\
\frac{dR}{dt} =  - p_0  r _R B\\
\frac{d A^i}{dt} =  - p _i r _{i} B\,(i = 1, 2,\ldots , n).
\end{cases}
\end{equation}
Let 
\begin{equation} \label{ai}
a_i=\frac{r_R r_{i}(\alpha_c^{A^i}-\alpha_d^{A^i})}{A_0^i}\,(i=1,2,\ldots , n)
\end{equation}
and 
\begin{equation}\label{bi}
\begin{aligned}
b_0 &= \left( \alpha _c^{A^1} + ... + \alpha _c^{A^n} \right){r _R},\\
b_i &= \frac{r _{i} \left( \alpha _c^{A^i} - \alpha _d^{A^i}\right) R_0}{A_0^1} \,(i=1,2,\ldots , n).
\end{aligned}
\end{equation}
For the generalized model, we have the following result.
\begin{theorem}
Suppose that the optimal fire allocation $P^*$ is sought in the set $\mathcal{P} =\{ P=(p_0,\,p_1,\,p_2,\ldots ,p_n): p_i \text{ is piece-wise constant function },\, p_i \in [0,1] \; \forall i; \sum \limits_{i=0}^n p_i =1\} $. Then the optimal fire allocation of $B$ is\\
${P^*} = \left( {0,...,0,\underbrace 1_{i^{th}},0,...,0} \right)\,where\,\,i = \mathop {\arg \max }\limits_{j = 0,1,...,n} \left\{ {{b_j}} \right\}.$
\end{theorem}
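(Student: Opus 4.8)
The plan is to follow the scheme of the proof of Theorem \ref{theo1} verbatim, which adapts to the $n$-agent setting and in fact simplifies considerably. First I would set $X(t)=\int_0^t B(s)\,ds$, so that $X'(t)=B(t)$ and $X''(t)=B'(t)=-\sum_{i=1}^n f^i_\alpha(A^i)R$. Integrating each remaining equation of the system as in \eqref{Rt}--\eqref{A^2t} expresses every state variable as an affine function of $X$, namely $R(t)=-p_0 r_R X(t)+R_0$ and $A^i(t)=-p_i r_i X(t)+A^i_0$ for $i=1,\dots,n$. Substituting these together with \eqref{falphas} into the expression for $X''$ and collecting powers of $X$ yields the same quadratic-in-$X$ equation $X''=-C_1X^2+C_2X-C_3$ as in \eqref{d2XbyC}, where now $C_1=\sum_{i=1}^n a_i p_0 p_i$, $C_2=b_0 p_0+\sum_{i=1}^n b_i p_i=\sum_{j=0}^n b_j p_j$, and $C_3=(\alpha_c^{A^1}+\cdots+\alpha_c^{A^n})R_0$, with $a_i$ and $b_j$ as in \eqref{ai}--\eqref{bi}.

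Next I would multiply by $dX'(t)$ and integrate exactly as before to obtain $B(t)=X'(t)=\sqrt{-\frac{2}{3}C_1 X^3+C_2 X^2-2C_3 X+C_4}$. Since $C_3$ does not depend on $P$ and $C_1,C_2$ are non-negative, maximizing $B(t)$ reduces to the multi-objective program \eqref{moo} of simultaneously minimizing $C_1$ and maximizing $C_2$ over the simplex $\mathcal{P}$. Scalarizing with $\gamma\in[0,1]$ as in \eqref{e12}, I would study $F_\gamma(P)=\gamma\sum_{i=1}^n a_i p_0 p_i-(1-\gamma)\sum_{j=0}^n b_j p_j$ on $\mathcal{P}$.

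The key observation, and the reason the $n$-agent case needs no case split, is that the quadratic part $C_1=\sum_{i=1}^n a_i p_0 p_i$ is non-negative and vanishes at every vertex of $\mathcal{P}$: at a vertex either $p_0=0$ or all $p_i=0$ for $i\geq 1$, so each cross term $p_0 p_i$ is zero. Dropping the non-negative summand $\gamma C_1\geq 0$ gives the linear lower bound $F_\gamma(P)\geq -(1-\gamma)\sum_{j=0}^n b_j p_j\geq -(1-\gamma)\max_{0\le j\le n} b_j$, the last inequality holding because $\sum_{j=0}^n p_j=1$. Writing $i=\arg\max_{0\le j\le n} b_j$ and evaluating at the vertex $P^*$ with a single $1$ in the $i$-th slot, one checks directly that $C_1=0$ and $C_2=b_i$, so $F_\gamma(P^*)=-(1-\gamma)b_i$ attains the lower bound for every $\gamma$. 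Hence $P^*$ minimizes every scalarization, is therefore Pareto optimal for \eqref{moo}, and is the claimed optimal allocation.

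I expect the only genuine subtlety to be verifying that a \emph{single} vertex simultaneously attains both objectives: that the vertex maximizing the linear functional $C_2$ also drives the quadratic $C_1$ to its global minimum $0$. This is exactly what collapses the five-case argument of Theorem \ref{theo1} into one line, and it rests on the structural fact that $C_1$ contains only the cross products $p_0 p_i$, every one of which vanishes on the vertex set of the simplex. The remaining ingredients — the substitution, the integration yielding the square-root form, and the non-negativity of $C_1$ and $C_2$ — are routine repetitions of the computations already carried out for the two-agent model.
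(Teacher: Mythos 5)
Your proposal is correct, and it follows the paper's pipeline step for step: the substitution $X(t)=\int_0^t B(s)\,ds$, the reduction to $X''=-C_1X^2+C_2X-C_3$, the square-root formula for $B(t)$, the surrogate bi-objective problem \eqref{moo}, and its scalarization $F_\gamma$. Where you genuinely depart from the paper is the final step. The paper finishes this theorem by ``considering cases like in Theorem \ref{theo1}'', i.e.\ by splitting according to the ordering of $b_0,b_1,\ldots,b_n$ and bounding $F_\gamma$ separately in each case, exactly as in the five-case analysis of the two-agent model. You instead give a single uniform bound: since $\gamma\sum_{i=1}^n a_i p_0 p_i\geq 0$, one has $F_\gamma(P)\geq -(1-\gamma)\sum_{j=0}^n b_j p_j\geq -(1-\gamma)\max_{0\leq j\leq n} b_j$, and the vertex $P^*$ attains this bound because every cross term $p_0p_i$ vanishes at a vertex of the simplex. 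This is cleaner and, for general $n$, arguably more complete than the paper's appeal to ``similar cases'', since enumerating orderings of $n+1$ numbers does not scale; indeed, your observation that $P^*$ is an ideal point --- simultaneously the global minimizer of $C_1$ (value $0$) and the global maximizer of the linear functional $C_2$ over the simplex --- shows the scalarization machinery is not even needed here. (You also silently correct the typo in \eqref{bi}, where the denominator should be $A_0^i$ rather than $A_0^1$.) One caveat, which your proof shares with the paper's rather than introduces: optimality is established for the surrogate problem \eqref{moo}, taken as a proxy for maximizing $B(t)$ via the square-root formula, so your argument is exactly as strong as, but no stronger than, the paper's in that respect.
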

\begin{proof}
Let  $X\left( t \right) = \int\limits_0^t {B\left( s \right)ds \Rightarrow } X'\left( t \right) = B\left( t \right)$\\  
By similar calculations, one gets:\\
\begin{equation*}
\begin{aligned} 
X'\left( t \right) &= B\left( t \right)\\
& = \sqrt { - \frac{2}{3}{C_1}{X^3}\left( t \right) + {C_2}{X^2}\left( t \right) - 2{C_3}X\left( t \right) + {C_4}} .
\end{aligned}
\end{equation*}
where 
\begin{equation*}
\begin{aligned}
C_1 &= \sum\limits_{i = 1}^n \frac{p_0 p_i r _R r _i \left( \alpha _c^{A^i} - \alpha _d^{A^i} \right)}{A_0^i}, \\
C_2 &= \sum\limits_{i = 1}^n \frac{p_ir _i\left( \alpha _c^{A^i} - \alpha _d^{A^i} \right)R_0 + p_0r _R\alpha _c^{A^i} A_0^i}{A_0^i}, \\
C_3 &= R_0\sum\limits_{i = 1}^n {\alpha _c^{A^i}}.
\end{aligned}
\end{equation*}
Our optimal problem now becomes :  $\left\{ \begin{array}{l}
\mathop {\min }\limits_{P\in \mathcal{P}}  {C_1}\\
\mathop {\max }\limits_{P\in \mathcal{P}}  {C_2}.
\end{array} \right.$\\
Using notations in \eqref{ai} and \eqref{bi}, the problem becomes \\
\begin{equation}
\begin{aligned}
&\begin{cases}
\min \sum \limits_{i=1}^n a_ix_0x_i\\
\min \sum \limits_{i=0}^n -b_i x_i
\end{cases}
& s.t.
\begin{cases}
0 \le {x_i} \le 1,\\
\sum \limits_{i=0}^n x_i = 1
\end{cases}
\end{aligned}
\end{equation}

Using the scalarization method and for $\gamma \in [0,1]$, setting \\
\begin{equation*}
F_\gamma \left( {x_0,{x_1},\ldots,{x_n}} \right) =  \gamma \sum\limits_{i=1}^n a_ix_0x_i  
 - \left( {1 - \gamma } \right)\sum\limits_{i=0}^n b_ix_i, 
\end{equation*}
we now get the problem:\\
\begin{equation*}
\begin{aligned}
&\min F_\gamma \left( {x_0,{x_1},...,{x_n}} \right)\\
&s.t.
\begin{cases}
0 \le {x_i} \leq 1 \,(i=0,1,\ldots ,n),\\
\sum\limits_{i = 0}^n {{x_i}}  = 1.
\end{cases}
\end{aligned}
\end{equation*}
By considering cases like in Theorem \ref{theo1} and using similar arguments, we complete the proof.
\end{proof}
\section{ Numerical illustrations}
In this section, we will present numerical experiments for three cases of Theorem \ref{theo1}, where the Red force is attacked in the first, the second and the third stage, respectively. 
\subsection{Experiment 1: The Red force is attacked in the first stage}
In this experiment, we consider a battle between Blue force and Red force supported by two supply agents $A^1,A^2$ with initial number of troops are 
$$R_0=120, A_0^1=30, A_0^2=20, B_0=160.$$
The attrition rates of $R$ with $A^1,A^2$ and those of $B$ to $R$ and $A^1,A^2$ are given in the following table
\begin{center}
\begin{tabular}{|c|c|c|c|}
\hline
 \multirow{2}{*}{Case 1}&$(\alpha_d^{A^1},\alpha_c^{A^1})$ & $(\alpha_d^{A^2},\alpha_c^{A^2})$ & $(r_R,r_1,r_2)$  \\
 \cline{2-4}
& $(0.15, 0.4)$ & $(0.1, 0.3)$ & $(0.5, 0.3, 0.2)$ \\
 \hline
\end{tabular}
\end{center}
We computed the "threatening rates" by \eqref{b012} and obtained $$b_0=0.35, b_1=0.3, b_2=0.24.$$ 
Therefore, the optimal fire allocation for Blue force in the first stage is $P^*=(1, \;0, \; 0)$, i.e, Blue force concentrates its firepower to the Red force, not its supply agents. Besides the optimal fire allocation, we also consider two other fire allocations $P_1=(0.7,\; 0.2,\; 0.1)$ and $P_2=\{(0,\; 1,\; 0) \rightarrow (1,\; 0,\; 0)\}$. The allocation $P_2$ should be interpreted as follows: in the first stage, Blue force focuses its firepower to fight supply agent $A^1$, after $A^1$ is eliminated, it attacks the Red force with its full power.
\begin{figure}
\includegraphics[scale=0.25]{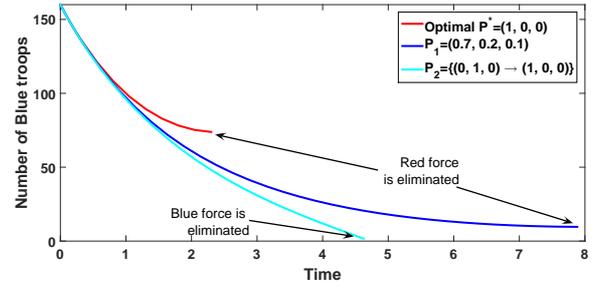}
\caption{Number of Blue troops \textit{vs} time in Experiment 1}.
\label{exp1}
\end{figure}
The outcome of these calculations are presented in Figure \ref{exp1}
\subsection{Experiment 2: The Red force is attacked after one of its supply agents is excluded}
We slightly modify the parameters used in Experiment 1, thus make 
\begin{center}
\begin{tabular}{|c|c|c|c|}
\hline
 \multirow{2}{*}{Case 2}&$(\alpha_d^{A^1},\alpha_c^{A^1})$ & $(\alpha_d^{A^2},\alpha_c^{A^2})$ & $(r_R,r_1,r_2)$  \\
  \cline{2-4}
& $(0.15, 0.4)$ & $(0.1, 0.3)$ & $(0.5, 0.15, 0.4)$ \\
 \hline
\end{tabular}
\end{center}
The initial conditions are kept as in Experiment 1 and the "threatening rates" now are 
$$
b_0=0.35,\, b_1=0.15,\, b_2=0.48.
$$
According to these rates, the optimal fire allocation for the first stage is now $P^*=[0,\;0,\; 1]$. When the first stage finishes at $t_1=0.7536$, the troops of the Red force and agent $A^1$ remain unchanged, while the Blue troops is $B(t_1)=106.3$. The fully-disconnected attrition rate of $R$ with $A^2$ is added to the attrition rates of $R$ with $A^1$ and make these rates now 
$$ (\tilde{\alpha}_d^{A^1}, \tilde{\alpha}_c^{A^1}) = (0.25,\;0.5).$$  
Threatening rate of Red force is now $\tilde{b_0}=\tilde{\alpha}_c^{A^1} r_R =0.25$, while threatening rate of $A^1$ remains the same as $b_1=0.15$.
It follows that the next target of Blue force is the Red force. After finishing the Red troops, Blue force puts an end to the battle. Hence, the optimal fire allocation is 
$$P^* = \{(0,\; 0,\; 1) \rightarrow (1,\; 0,\; 0)\}.$$
To make a comparison, two fire allocations are selected: 
$$P_1=(1,\; 0,\; 0) \text{ and } P_2 =(0,\; 1,\; 0).$$
\begin{figure}
\includegraphics[scale=0.25]{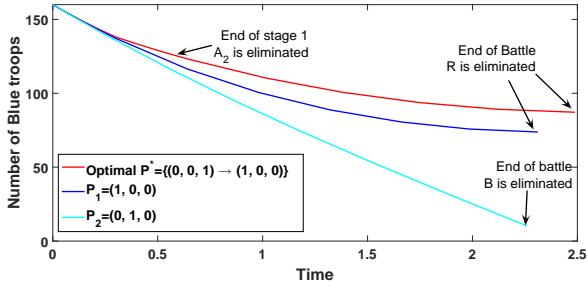}
\caption{Number of Blue troops \textit{vs} time in Experiment 2}.
\label{exp2}
\end{figure}
Figure \ref{exp2} depicts the conclusion of the three choices. Among three choices of fire allocations, $P^*$ and $P_1$ result in victory for Blue force. The battle ends with the failure of Blue force with fire allocation $P_2$.
\subsection{Experiment 3: Blue force attacks two supply agents of Red force first}
In this experiment, we consider equation \eqref{model} with 
\begin{center}
\begin{tabular}{|c|c|c|c|}
\hline
 \multirow{2}{*}{Case 3}&$(\alpha_d^{A^1},\alpha_c^{A^1})$ & $(\alpha_d^{A^2},\alpha_c^{A^2})$ & $(r_R,r_1,r_2)$  \\
 \cline{2-4}
& $(0.15, 0.4)$ & $(0.1, 0.3)$ & $(0.5, 0.3, 0.4)$ \\
 \hline
\end{tabular}
\end{center}
and initial conditions
$$
R_0=120, \,A_0^1=30, \,A_0^2=20,\, B_0=200.
$$
Threatening rates of the Red party are 
$$
b_0=0.35,\; b_1=0.3,\; b_2=0.48,
$$
and thus, for the first stage, Blue force focuses its firepower to $A^2$. Once the first stage finishes, the remaining troop of Blue force is $182$. The fully-disconnected attrition rate of $R$ with $A^2$ is added to the attrition rates of $R$ with $A^1$ and make these rates now 
$$ (\tilde{\alpha}_d^{A^1}, \tilde{\alpha}_c^{A^1}) = (0.25,\;0.5).$$  
\begin{figure}
\includegraphics[scale=0.25]{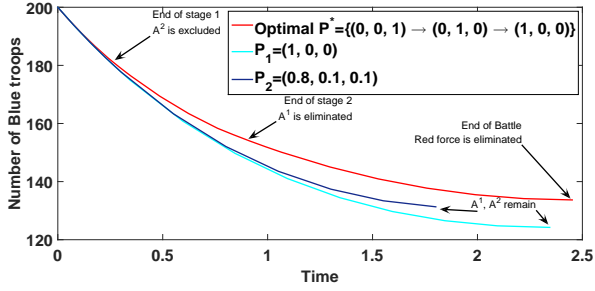}
\caption{Number of Blue troops \textit{vs} time in Experiment 3}.
\label{exp3}
\end{figure}
Threatening rate of Red force is now $\tilde{b_0}=\tilde{\alpha}_c^{A^1} r_R =0.25$, while threatening rate of $A^1$ remains the same as $b_1=0.3$. Therefore, in the second stage, Blue force will concentrate on fighting agent $A^2$. And for the third stage, Blue force will conclude this battle with a fight against $R$ with its full firepower. These three stages are depicted in Figure \ref{exp3} (the red curve). 
\section{Conclusion}
In this work, we have introduced a novel model for battle with supplies. Computing the "threatening rates" of the Red party's entities, we managed to show the optimal fire allocation of the Blue party. 
Generally, at the beginning of any stage, Blue force will concentrate its firepower to the entity which possesses the largest threatening rates; at the end of the stage, threatening rates are recalculated. This process is repeated until the battle ends. 
These results have generalized some known results in this field. 
\bibliography{mohinhtoan}
\end{document}